\documentclass[11pt]{article}
\usepackage[final]{acl}

\usepackage{times}
\usepackage{latexsym}
\usepackage[T1]{fontenc}
\usepackage[utf8]{inputenc}
\usepackage{microtype}
\usepackage{amsfonts}
\usepackage{float}
\usepackage{url}
\usepackage{booktabs}
\usepackage{multicol}
\usepackage{multirow}
\usepackage{graphicx}
\usepackage{makecell}
\usepackage{algorithm}
\usepackage{algpseudocode}
\usepackage{amsmath}
\usepackage{wrapfig}
\usepackage{lineno}
\usepackage{pifont}
\usepackage{cleveref}
\usepackage{colortbl}
\usepackage[table]{xcolor}
\PassOptionsToPackage{table}{xcolor}
\usepackage{tcolorbox}

\newcommand{\nn}{\notag}



\usepackage{mathtools}

\usepackage[mathscr]{euscript}
\usepackage{bm}
\usepackage{xspace}

\newcommand{\Vc}{\mathcal{V}}

\newcommand{\x}{\vct{x}}

\newcommand{\w}{{\vct{w}}}

\newcommand{\e}{\vct{e}}

\newcommand{\y}{\vct{y}}

\newcommand{\z}{\vct{z}}

\newcommand{\hb}{\vct{h}}

\newcommand{\W}{\mtx{W}}



    



\newcommand{\vct}[1]{\bm{#1}}
\newcommand{\mtx}[1]{\bm{#1}}

\usepackage{subcaption}
\usepackage{microtype}
\usepackage{algorithm}
\usepackage{multirow}
\newcommand{\ours}{\texttt{For-Value}}
\newcommand{\dfv}{\texttt{DataInf}}
\newcommand{\hfree}{\texttt{Hessian-free}}
\newcommand{\emb}{\texttt{Emb}}
\newcommand{\hyp}{\texttt{HyperINF}}
\usepackage{algpseudocode}
\newtheorem{theorem}{Theorem}

\newtheorem{assumption}{Assumption}
\newtheorem{definition}{Definition}

\crefname{figure}{Fig.}{Fig.}
\crefname{table}{Tab.}{Tab.}
\crefname{equation}{Eq.}{Eq.}
\crefname{section}{Sec.}{Sec.}
\crefname{appendix}{App.}{App.}

\usepackage{inconsolata}

\usepackage{graphicx}

%
%

\title{For-Value: Efficient Forward-Only Data Valuation for finetuning LLMs and VLMs}


\author{Wenlong Deng$^{1\star}$, Qi Zeng$^{3}$,  Jiaming Zhang$^{1}$, Minghui Chen$^{1}$, Zixin Ding$^{3}$,\\ \textbf{Christos Thrampoulidis}$^{1}$, \textbf{Boying Gong}$^{3\dag}$, \textbf{Xiaoxiao Li}$^{1,2\dag}$
 \vspace{8pt}
 \\
 $^1$University of British Columbia,
 $^2$Vector Institute,
 $^3$Meta
\vspace{3pt} \\
 $^\star$Work done at Meta, $^\dag$Corresponding author
}

\begin{document}
\maketitle
\begin{abstract}
Data valuation is essential for enhancing the transparency and accountability of large language models (LLMs) and vision-language models (VLMs). However, existing methods typically rely on gradient computations, making them computationally prohibitive for billion-parameter models and precluding batch parallelization. In this work, we introduce \ours{}, a forward-only data valuation framework that enables efficient batch-scalable value estimation while maintaining effectiveness. Leveraging the expressive power of pretrained LLMs/VLMs, we theoretically demonstrate that data valuation can be captured by the alignment between the final hidden representations and prediction errors at the last layer. In light of this insight, \ours{} computes data value using a simple closed-form expression with a single forward pass, eliminating the need for costly backpropagation and enabling efficient batch calculating at scale. Extensive experiments show that \ours{} matches or outperforms gradient-based baselines in detecting influential data and mislabeled data, while achieving significant efficiency improvements. Our code is available at \href{https://github.com/vengdeng/For-Value-Efficient-Forward-Only-Data-Valuation-for-finetuning}{\textcolor{pink}{GitHub}}.
\end{abstract}

\section{Introduction}
Modern large language models (LLMs) and vision-language models (VLMs) have achieved remarkable success across a wide range of applications, driven by the power of large-scale pretraining~\citep{achiam2023gpt}. These pretrained models are subsequently fine-tuned for tasks such as machine translation, medical diagnosis, and multimodal reasoning~\citep{guo2025deepseek, bai2025qwen2, wu2025medreason, shao2024deepseekmath, hao2025can}. Despite their impressive performance, these models remain prone to generating factually incorrect or biased outputs~\citep{deng2023fairness, ferrara2023should}, often due to the presence of irrelevant, mislabeled, or unrepresentative training data. This highlights the need for scalable methods to quantify the impact of each individual training data and select the high-value samples that benefit the targeted tasks. 
\begin{figure}[t]
\centering
\includegraphics[width=0.85\linewidth]{images/ee_datavalue.pdf}
\vspace{-4mm}
\caption{Comparison of data valuation methods in terms of effectiveness and efficiency when selecting training data from the Noise-Huatuo-Complex-CoT dataset for fine-tuning.}
\label{fig:ee_trade}
\vspace{-7mm}
\end{figure}
\begin{figure*}[t]
\centering
\includegraphics[width=0.95\linewidth]{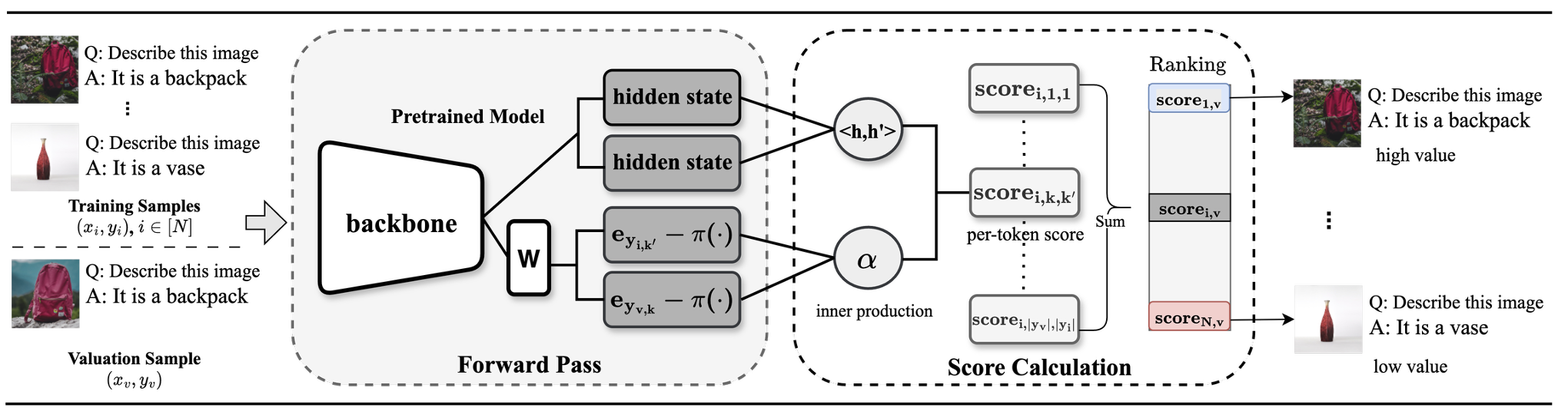} 
\caption{Pipeline of \ours{}. Given a valuation sample and a training dataset, \ours{} performs a forward pass over all data to compute scores (\cref{eq:value}) for each training example, using the last hidden embeddings and the prediction error $\alpha$. The training samples are then ranked based on these computed values.
}
\label{fig:piepline}
\vspace{-5mm}
\end{figure*}
\\
The data valuation task aims to assign scores to each training sample based on its effect on model performance on a valuation set (e.g., validation data)~\citep{wang2024data}, where performance is commonly assessed using loss, margin, or likelihood~\citep{bae2024training}. Notable approaches include influence functions~\citep{kwon2023datainf} and Shapley value-based methods~\citep{ghorbani2019data}, which provide frameworks for estimating how individual data points affect model predictions~\citep{kwon2023datainf, zhou2024hyperinf}. These methods have proven effective in downstream applications such as detecting mislabeled data~\citep{koh2017understanding, kwon2023datainf}, identifying influential examples, diagnosing bias~\citep{kong2021resolving}, and auditing datasets~\citep{grosse2023studying}. However, these methods are computationally prohibitive for V/LLMs due to reliance on Hessians or repeated retraining.
\\
To mitigate the prohibitive costs of value estimation, prior work has introduced various approximations. TracIn \citep{pruthi2020estimating} tracks first-order gradient similarity across checkpoints,  while DataInf~\citep{kwon2023datainf} and HyperInf~\citep{zhou2024hyperinf} focus on efficient Hessian approximations. However, these methods incur significant trade-offs, ranging from the massive storage requirements of TracIn to the cubic complexity and scaling errors of Hessian-based approaches, what's more, their reliance on gradient computations necessitates per-sample backpropagation, making it difficult to use large batch sizes for parallel processing.  More recently, in run Shapley value~\citep{wang2024data} calculates value along training, but still requires backpropagation and storing per-token gradients and activations, which is costly for long-sequence LLMs; additional multiplications to reconstruct gradients further limit batch scalability. Finally, while training-free similarity metrics exist for other domains \citep{just2023lava, yang2023gmvaluator}, their foundational assumptions are incompatible with the autoregressive training dynamics of LLMs and VLMs. 
\\
In this work, we challenge the necessity of gradient back-propagation for data valuation. Instead, we propose a batch-scalable, forward-only valuation framework by addressing the following core question:
\begin{center} \begin{tcolorbox} [colframe=black!50!white, colback=gray!5!white, sharp corners=all, boxrule=0.5mm, rounded corners=southeast, arc is angular ]
\textit{Can we achieve scalability via forward-only passes?}
\end{tcolorbox}
\end{center}
\noindent\textbf{Contributions.} We address this foundational question through a rigorous analysis of LLM/VLMs supervised finetuning learning dynamics and extensive empirical evaluation. Our contributions are as follows:

\noindent$\bullet$ \textbf{Closed-form Data Value Approximation.} We first establish that data value in LLM and VLM fine-tuning can be approximated and derived by \textit{last-layer gradient}. Under the unconstrained feature assumption, we show that the influence of a training sample on a valuation sample, measured via the last-layer gradient, \textit{admits a closed-form expression using only forward inference}: the alignment between their last-layer hidden representations, weighted by similarities in their token-level prediction errors.


\noindent$\bullet$ \textbf{Scalability via \ours{}.} By relying only on last-layer hidden representations and token-level prediction errors obtained from a single forward pass, influence scores computed by \ours{} enable efficient large-batch parallelism and true scalability to modern LLMs and VLMs.


\noindent$\bullet$ \textbf{Empirical Superiority in Speed and Accuracy.}  We conduct extensive experiments on both LLMs and VLMs across a range of downstream tasks (identifying influential \& mislabeled samples, and influential data finetuning). \ours{} achieves both effectiveness and superior efficiency compared to prior data valuation methods. 
\vspace{-1mm}
\section{Related Work}
\vspace{-1mm}
\textbf{Pretrained LLMs and VLMs.} Foundation models, such as large language models and vision-language models, serve as powerful initialization points thanks to their extensive pretraining on large-scale datasets. LLMs, including LLaMA~\citep{touvron2023llama} and GPT-4~\citep{achiam2023gpt}, are trained on diverse textual data for language understanding and generation. VLMs, such as Qwen2.5-VL~\citep{bai2025qwen25vltechnicalreport}, LLaMA-VL~\citep{meta2024llama}, and GPT-4V~\citep{yang2023dawn}, integrate visual and textual inputs to perform tasks like image captioning and visual question answering. 
\\
\noindent\textbf{Data Valuation.}
Data valuation aims to measure the contribution of individual training examples in $\mathcal{D}^{\rm train}$ to a model’s performance on a valuation set $\mathcal{D}^{\rm val}$, typically assessed by loss or likelihood~\citep{bae2024training,wang2024data}. Most existing approaches are based on influence estimation, ranging from Hessian-based methods~\citep{koh2017understanding} to more scalable approximations such as TracIn~\citep{pruthi2020estimating}, DataInf~\citep{kwon2023datainf}, and HyperInf~\citep{zhou2024hyperinf}. Despite improved efficiency, these methods still require per-sample gradient computation during or after fine-tuning. Shapley value-based methods~\citep{ghorbani2019data} and their online variants~\citep{wang2024data} provide principled alternatives, but remain impractical due to repeated training or the need to compute and store gradients. In contrast to these gradient-dependent approaches, our method enables accurate and batch-scalable data valuation using a single forward pass.
\vspace{-1mm}
\section{Method}\label{sec:method}
\vspace{-1mm}
In this section, we provide a theoretical justification that the influence of a training sample in LLM and VLM fine-tuning is effectively captured by a closed-form score based on the last-layer gradient. We further show that this score can be effectively computed using \ours{}, a batch-scalable valuation procedure with forward only.
\\
\textbf{Notation:} Let \( \W \), \( \w_z \), and \( \hb_{\z} \) denote the token unembedding matrix, unembedding of a token \( z \in \mathcal{V} \) , where $\Vc$ is the vocabulary, and hidden embedding of generated tokens \( \z \in \mathcal{V}^* \) with embedding dimension $d$, respectively. Let \( \z_k \) be the \( k \)-th token in \( \z \) and \( \z_{<k} \) be the first \( k - 1 \) tokens in \( \z \). Lastly, we denote by \( \e_z \in \mathbb{R}^{|\mathcal{V}|} \) the standard basis vector corresponding to \( z \in \mathcal{V} \).
\\
\noindent Formally, given a training dataset ${(\x_i,\y_i)}_{i=1}^n \in \mathcal{D}^{\rm train}$ and a valuation sample $(\x_v,\y_v) \in \mathcal{D}^{\rm val}$, we define the notion of \emph{Data Value} as follows:
\begin{definition}[Data Value] At any training time $t>0$, a training sample is more valuable to a given data point \( (\x_v, \y_v) \) if it results in a greater likelihood change on valuation data $\frac{d}{dt} \ln \pi_{\theta(t)} (\y_v | \x_v)$.
\end{definition}
\noindent This definition captures how much a training sample improves the model’s confidence in predicting valuation sample $(\x_v, \y_v)$. A higher likelihood also corresponds to a lower loss on the valuation data during LLM/VLM fine-tuning. More broadly, our definition of data value is closely tied to the perplexity metric, which inversely reflects the model’s uncertainty in text generation. In this work, we focus on the pretrained initialization (\(t=0\)) and omit the time index \(t\) for brevity.
\vspace{-1mm}
\subsection{Forward-Only Data Value.}
\vspace{-1mm}
Here, we derive the closed-form score from the last-layer gradient. We first state the unconstrained feature assumption:
\vspace{-1mm}
\begin{assumption}[Unconstrained Features]\label{the:uf} Expressive (enough) neural networks (e.g., pretrained LLMs/VLMs) can produce unconstrained embeddings $\mathbf{h}_{\x} \in \mathbb{R}^d$ independent of the architecture’s specific complexities~\citep{mixon2022neural,deng2025effect,zhao2024implicit}. These embeddings are subsequently transformed into logits by a \textit{token unembedding matrix} 
$\mathbf{W} \in \mathbb{R}^{|\mathcal{V}| \times d}$. The resulting logits are passed through a softmax function to yield 
a probability distribution over possible next tokens. To assign probabilities to sequences 
$\y \in \mathcal{V}^*$, the language model $\pi_{\theta}$ operates in an autoregressive manner, \textit{i.e.},
\vspace{-1mm}
\begin{align}
\small
\pi_{\theta}(\y \mid \x) = \prod_{k=1}^{|\y|} \operatorname{Softmax}(\mathbf{W} \mathbf{h}_{\x, \y_{<k}})_{y_k}\,. \nn
\end{align}
\end{assumption}
\vspace{-1mm}
\noindent  Notably, the unconstrained feature assumption has been widely adopted in the analysis of pretrained LLMs~\citep{mixon2022neural, razin2024unintentional,zhao2024implicit}. For example, it has been leveraged in reinforcement learning studies~\citep{deng2025effect, razin2024unintentional} and in geometric analyses of LLM representations~\citep{zhao2024implicit}, reinforcing its role as a foundation for \ours{}. 
Under the unconstrained feature setting, the influence of a training sample on valuation sample is represented as (detailed proof in Appendix):
\begin{theorem}\label{the:val}
For a sample $\x_v$ and its generation $\y_v$ that await valuation, when fine-tuning a pretrained model using a training sample $(\x_i,\y_i), i \in [n]$, when no training input $\x_i$ is identical to the valuation input $\x_v$\footnote{This assumption is mild, as training inputs often differ from valuation inputs. E.g., in VLMs, images are often unique or paired with different questions. More discussion see Appx.}, the training data exhibits larger value to the valuation data as the following increases:
\begin{align}
\sum_{k=1}^{|\y_v|}  \sum_{k'=1}^{|\y_{i}|} 
\alpha_{k,k'} \cdot 
\left\langle \hb_{\x_v, \y_{v,<k}}, \hb_{\x_i, \y_{i,<k'}} \right\rangle \label{eq:value}
\end{align}
where $\alpha_{k,k'} = \big\langle 
    \mathbf{e}_{\y_{v,k}}-\pi_{\theta}(\cdot \mid \x_v, \y_{v,<k}),
    \mathbf{e}_{\y_{i,k'}} - \pi_{\theta}(\cdot \mid \x_i, \y_{i,<k'}) \big\rangle  $
quantifies the similarity of token-level prediction error across samples.
\end{theorem}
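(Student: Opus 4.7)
The plan is to express $\frac{d}{dt}\ln\pi_{\theta(t)}(\y_v\mid\x_v)$ as an inner product of parameter gradients and then collapse it to the advertised double sum using Assumption~\ref{the:uf}.

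\textbf{Step 1 (gradient flow).} First, I would write the training dynamics as gradient flow on the training log-likelihood at $(\x_i,\y_i)$, so that $\dot\theta = \nabla_\theta \ln\pi_{\theta}(\y_i\mid\x_i)$ (absorbing the learning rate). The chain rule then yields
\begin{equation}
\frac{d}{dt}\ln\pi_\theta(\y_v\mid\x_v) = \big\langle \nabla_\theta \ln\pi_\theta(\y_v\mid\x_v),\ \nabla_\theta \ln\pi_\theta(\y_i\mid\x_i)\big\rangle, \nonumber
\end{equation}
which reduces the theorem to computing this inner product in closed form.

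\textbf{Step 2 (separating parameters).} Under Assumption~\ref{the:uf}, the parameter vector factorizes as $\theta=(\W,\{\hb_{\x,\y_{<k}}\})$, with the hidden embedding of every distinct prefix $(\x,\y_{<k})$ treated as an independent variable. When $\x_v \neq \x_i$, no prefix of the valuation sequence coincides with any prefix of the training sequence, so each hidden-embedding block appears in at most one of the two log-likelihoods. Consequently all cross contributions along the $\hb$-coordinates vanish and only the $\W$-component of the inner product survives.

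\textbf{Step 3 (softmax gradient and trace identity).} The standard log-softmax identity gives $\nabla_\W \ln\operatorname{Softmax}(\W\hb)_y = (\e_y - \operatorname{Softmax}(\W\hb))\hb^\top$. Summing over the autoregressive factorization,
\begin{equation}
\nabla_\W \ln\pi_\theta(\y\mid\x) = \sum_{k}\big(\e_{y_k} - \pi_\theta(\cdot\mid\x,\y_{<k})\big)\,\hb_{\x,\y_{<k}}^\top. \nonumber
\end{equation}
Plugging in both the valuation and training sums and expanding the Frobenius inner product via the rank-one identity $\langle \mathbf{u}\mathbf{v}^\top,\mathbf{p}\mathbf{q}^\top\rangle_F = \langle \mathbf{u},\mathbf{p}\rangle\,\langle \mathbf{v},\mathbf{q}\rangle$ produces exactly $\sum_{k,k'}\alpha_{k,k'}\,\langle\hb_{\x_v,\y_{v,<k}},\hb_{\x_i,\y_{i,<k'}}\rangle$ with $\alpha_{k,k'}$ as defined in the statement, completing the proof.

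\textbf{Main obstacle.} The delicate step is Step 2: justifying why the hidden embeddings of the training sequence are not parameters of the valuation likelihood. This is precisely where the unconstrained-feature viewpoint does the real work — without it, shared lower layers would couple the embeddings of the two sequences and generate extra cross terms. The footnote's input-distinctness proviso is exactly what guarantees that no prefix is shared across the two sequences, so the embedding blocks are genuinely independent variables and the mixed $\hb$-gradients have disjoint supports. Everything outside of this step reduces to linear algebra on the softmax gradient.
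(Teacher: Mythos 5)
Your proposal follows the same route as the paper's own proof: differentiate $\ln\pi_{\theta(t)}(\y_v\mid\x_v)$ along the gradient flow, split the parameter inner product into the unembedding block $\W$ and the per-prefix hidden-embedding blocks, observe that the $\hb$-gradients have disjoint supports once $\x_v\neq\x_i$ (the paper's ``Term (II)'' vanishing under its Distinct Input assumption), and collapse the surviving $\W$-term via the log-softmax gradient and the rank-one inner-product identity into $\sum_{k,k'}\alpha_{k,k'}\langle\hb_{\x_v,\y_{v,<k}},\hb_{\x_i,\y_{i,<k'}}\rangle$. The argument is correct and matches the paper's proof in both structure and the key step you flag as delicate.
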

As stated in the theorem, the data value arises from the alignment between hidden representations and prediction errors (effect of prediction error see \cref{sec:abstudy}). A larger score of \cref{eq:value} indicates a greater increase in the likelihood of the valuation data. Since \cref{eq:value} depends on variables resulting from forward, we refer to it as a forward-only data value, termed as \ours{}.

\vspace{-1mm}
\subsection{Scalable Forward-Only Value Calculation}
\vspace{-1mm}
We now present how to make \ours{} defined in \cref{eq:value} scalable. \Cref{fig:piepline} depicts the overall pipeline, and we detail on each component below.
\\
\noindent \textbf{Sparse Matrix Similarity:}
We first rewrite \cref{eq:value} as a matrix inner product.  Specifically, by rearranging the computation to perform the summations over $k$ and $k'$ prior to taking the inner product, the overall cost is reduced to that of a single matrix similarity operation. However, directly forming the outer product between the prediction error vector (e.g., $\mathbf{e}_{y_{i,k'}} - \pi_{\theta}(\cdot \mid \x, \y_{i,<k'})$) and the hidden embedding incurs a prohibitive computational cost of $O(|\Vc|\, d)$. 
To address this, we leverage the sparsity of the predictive distribution: we observe that probability mass is largely concentrated on words appearing in the samples. We therefore restrict the computation to a sample-associated vocabulary $\hat{\Vc}$. 
Since $|\hat{\Vc}| \ll |\Vc|$, the computational complexity is reduced to $O(|\hat{\Vc}|\, d)$ (see \Cref{tab:comparison} for a detailed efficiency analysis). Under this reformulation, the value function admits the following sparse matrix inner product:
\begin{align} \label{eq:eff_value}
\small
    &\Bigg\langle 
    \sum_{k=1}^{|\y_v|} 
    \left(\mathbf{e}_{\y_{v,k}} - \pi_{\theta} (\cdot \mid \x, \y_{v,<k}) \right)_{\hat{\Vc}}
    \hb^T_{\x_v, \y_{v,<k}},  \nn \\ 
    &
    \sum_{k'=1}^{|\y_i|} 
    \left( \mathbf{e}_{\y_{i,k'}} - \pi_{\theta} (\cdot \mid \x, \y_{i,<k'}) \right)_{\hat{\Vc}}
    \hb^T_{\x_i, \y_{i,<k'}} 
    \Bigg\rangle.
    \vspace{-4mm}
\end{align}
Moreover, when performing batch valuation, the effective vocabulary can be further restricted to the in-batch vocabulary, as illustrated in Step~6 of \Cref{alg:eff_cv}.
\\
\noindent\textbf{\ours{} Algorithm:}~\Cref{alg:for-value-batch} summarizes the efficient batch computation of \ours{}. We first obtain hidden embeddings and prediction error vectors for both the valuation and training batches using a single forward pass. We then calculate the score with sparse matrix similarity. Finally, we rank training samples by sorting their resulting influence scores. Importantly, the algorithm can be naturally extended to a group of valuation pairs by averaging their influence scores.
\begin{figure}
\vspace{-2mm}
\begin{minipage}{\linewidth}
\begin{algorithm}[H]
\caption{For-Value: Forward-Only Data Valuation}
\label{alg:for-value-batch}
\textbf{Input:} Training set $\{(\x_i, \y_i)\}_{i=1}^N$; valuation pair $(\x_v, \y_v)$; model $\pi_{\theta}$; batch size $B$. \\
\textbf{Output:} Data valuation $\mathcal{S}$.
\begin{algorithmic}[1]
\State Compute $\{\hb_{\x_v, \y_{v,<k}}\}_{k=1}^{|\y_v|}$ and $\{\pi_{\theta}(\cdot | \x_v, \y_{v,<k})\}_{k=1}^{|\y_v|}$ by doing inference $\pi_{\theta}(\x_v, \y_v)$.
\For{each batch $\{(\x_j, \y_j)\}_{j=1}^B$}
  \State Compute $\{\hb_{\x_j, \y_{j,<k'}}\}_{k'=1}^{|\y_j|}$ and \\ 
  \hspace{1.6em} $\{\pi_{\theta}(\cdot | \x_j, \y_{j,<k'})\}_{k'=1}^{|\y_j|}$ by running batch inference.
  \State $\hat{\Vc} \leftarrow \bigcup_{j=1}^{B} \Vc_{\x_j, \y_j} \cup \Vc_{\x_v, \y_v}$
  \State Compute errors $(\mathbf{e} - \pi(\cdot))$ for tokens in $\hat{\Vc}$.
  \State For each in batch, compute $S_{v,j}$ via Eq.~\eqref{eq:eff_value}.
\EndFor
\State $\mathcal{S} \leftarrow \{(\x_i, \y_i, S_{v,i})\}_{i=1}^N$.
\State Sort $\mathcal{S}$ by $S_{v,i}$ (descending).
\State \Return $\mathcal{S}$.
\end{algorithmic}
\label{alg:eff_cv}
\end{algorithm}
\end{minipage}
\vspace{-4mm}
\end{figure}

\begin{table*}[!h]
\centering
\resizebox{0.9\textwidth}{!}{%
\begin{tabular}{l*{4}{cc}}
\toprule
\textbf{Method} 
& \multicolumn{2}{c}{\textbf{Qwen2.5-1.5B}} 
& \multicolumn{2}{c}{\textbf{Llama-2-13B-chat}} \\
\cmidrule(lr){2-3} \cmidrule(lr){4-5}
& AUC $\uparrow$ & Recall $\uparrow$ & AUC $\uparrow$ & Recall $\uparrow$ \\
\midrule

\rowcolor{gray!10} \multicolumn{5}{l}{\textbf{Sentence transformations}} \\

\hfree\citep{pruthi2020estimating} & $0.785 \pm 0.096$ & $0.370 \pm 0.139$ & $0.999 \pm 0.002$ & $0.985 \pm 0.033$ \\
\dfv\citep{kwon2023datainf} & $0.981 \pm 0.019$ & $0.826 \pm 0.121$ & $\mathbf{1.000 \pm 0.000}$ & $\underline{0.997 \pm 0.010}$ \\
\hyp\citep{zhou2024hyperinf} & $\underline{0.993 \pm 0.013}$ & $\underline{0.934 \pm 0.063}$ &$\mathbf{1.000 \pm 0.000}$ & $\underline{0.998 \pm 0.011}$ \\
\emb\citep{yang2023gmvaluator} & $0.546 \pm 0.306$ & $0.148 \pm 0.205$ & $0.854 \pm 0.192$ & $0.563 \pm 0.412$ \\
\cellcolor{blue!6}\ours{} (ours)& \cellcolor{blue!6}$\mathbf{1.000 \pm 0.001}$ & \cellcolor{blue!6}$\mathbf{0.989 \pm 0.025}$ & \cellcolor{blue!6}$\mathbf{1.000 \pm 0.000}$ & \cellcolor{blue!6}$\mathbf{1.000 \pm 0.001}$ \\
\midrule
\rowcolor{gray!10} \multicolumn{5}{l}{\textbf{Math Problem (w/o reasoning)}} \\
\hfree\citep{pruthi2020estimating} & $0.835 \pm 0.235$ & $0.592 \pm 0.291$ & $0.770 \pm 0.174$ & $0.258 \pm 0.388$ \\
\dfv\citep{kwon2023datainf} & $0.985 \pm 0.032$ & $0.878 \pm 0.154$ & $\mathbf{1.000 \pm 0.000}$ & $\underline{0.999 \pm 0.006}$ \\
\hyp\citep{zhou2024hyperinf} & $\underline{0.986 \pm 0.024}$ & $\underline{0.942 \pm 0.080}$ & $\underline{0.995 \pm 0.018}$ & $0.967 \pm 0.057$ \\
\emb\citep{yang2023gmvaluator} & $0.555 \pm 0.298$ & $0.146 \pm 0.295$ & $0.762 \pm 0.239$ & $0.389 \pm 0.477$ \\
\cellcolor{blue!6}\ours{} (ours) & \cellcolor{blue!6}$\mathbf{1.000 \pm 0.000}$ & \cellcolor{blue!6}$\mathbf{0.998 \pm 0.011}$ & \cellcolor{blue!6}$\mathbf{1.000 \pm 0.000}$ & \cellcolor{blue!6}$\mathbf{1.000 \pm 0.002}$~\footnotemark\\

\midrule
\rowcolor{gray!10} \multicolumn{5}{l}{\textbf{Math Problem (w/ reasoning)}} \\

\hfree\citep{pruthi2020estimating} & $0.829 \pm 0.172$ & $0.524 \pm 0.350$ & $0.772 \pm 0.173$ & $0.258 \pm 0.388$ \\
\dfv\citep{kwon2023datainf} & $0.987 \pm 0.030$ & $0.892 \pm 0.155$ & $\underline{1.000 \pm 0.001}$ & $\underline{0.996 \pm 0.025}$ \\
\hyp\citep{zhou2024hyperinf} & $\underline{0.988 \pm 0.023}$ & $\underline{0.950 \pm 0.060}$ & $0.994 \pm 0.018$ & $0.961 \pm 0.074$ \\
\emb \citep{yang2023gmvaluator} & $0.560 \pm 0.310$ & $0.198 \pm 0.311$ & $0.725 \pm 0.217$ & $0.270 \pm 0.420$ \\
\cellcolor{blue!6}\ours{} (ours) & \cellcolor{blue!6}$\mathbf{1.000 \pm 0.000}$ & \cellcolor{blue!6}$\mathbf{0.998 \pm 0.008}$ & \cellcolor{blue!6}$\mathbf{1.000 \pm 0.000}$ & \cellcolor{blue!6}$\mathbf{1.000 \pm 0.000}$ \\

\bottomrule
\end{tabular}%
}
\caption{Influential data identification results on LLMs. \ours{} consistently achieves comparable or superior performance. Results are reported as Mean $\pm$ Standard Deviation (std).}
\label{tab:llms_res}
\vspace{-4mm}
\end{table*}
\section{Experiment Setup}\label{sec:exp}
\vspace{-1mm}
In this section, we describe the experimental setup. More details please see Appendix.
\\
\textbf{Baseline Methods.} We focus on the comparison with baseline methods designed for efficiency. Specifically, for parameter-efficiency, we follow \dfv{}~\citep{kwon2023datainf} and \hyp{}~\citep{zhou2024hyperinf} by applying LoRA modules at each transformer layer. For the hessian efficiency, we use \texttt{Hessian-free}~\citep{pruthi2020estimating,charpiat2019input} estimates influence scores via the dot product of first-order gradients, which is equivalent to the Trace-Inf~\citep{pruthi2020estimating} or the first-order in-run Shapley~\citep{wang2024data} at the last training iteration.  \texttt{DataInf}~\citep{kwon2023datainf} simplifies the Hessian calculation by swapping the
order of the matrix inversion and the average calculations, and \hyp{}~\citep{zhou2024hyperinf} employs a low-rank Fisher approximation of the Hessian. Finally, we include an embedding similarity method~\citep{yang2023gmvaluator}, originally proposed for image generation models, denoted as \emb{}.
\\
\textbf{Models.} Following~\citet{kwon2023datainf}, we evaluate LLMs using Llama-2-13B-chat~\citep{touvron2023llama} and Qwen-2.5-1.5B~\citep{qwen2025qwen25technicalreport} to cover a wider range of model sizes and families. Moreover, thanks to the efficiency of our method, we are able to run \ours{} on Qwen2.5 series models from 7B up to 72B parameters. In contrast, baseline methods require extensive training and prolonged runtimes, making them costly for these larger models. For VLMs, we adopt the widely used Qwen-2.5-VL-3B-Instruct~\citep{bai2025qwen25vltechnicalreport} and Llama-3.2-11B-Vision~\citep{meta2024llama}.
\\
\noindent \textbf{Influential Data Identification.} We evaluate all methods on influential data identification for LLMs and VLMs, following~\citet{kwon2023datainf}. For LLMs, we use sentence transformation and math word problem datasets (w and w/o reasoning). For VLMs, we adapt image-to-text tasks from~\citet{kwon2023datainf} to an image-to-text generation setting, including style generation (cartoons, pixel art, line sketches) and subject generation using the DreamBooth dataset~\citep{ruiz2023dreambooth}. We adopt two evaluation metrics from~\citet{kwon2023datainf}: (i) AUC, measuring the correlation between data values and pseudo-labels (1 if training and valuation samples share a class, 0 otherwise), averaged over valuation points; and (ii) Recall, the proportion of top-ranked training samples sharing the same class as the valuation point. More details and dataset examples see Appendix~\Cref{sec:d_details}.
\\
\noindent \textbf{Mislabeled Data Detection.} 
We evaluate mislabeled data detection on VLMs using the Kaggle cat–dog dataset~\citep{kaggle_dogs_vs_cats}, reformulated as a QA task with 50\% label being flipped, and report AUC and Recall; examples and further details are provided in the Appendix~\Cref{sec:d_details}.
\\
\noindent \textbf{Data Selection For Finetuning.} We evaluate the practical utility of \ours{} across two key reasoning domains: mathematics and medicine. For mathematics, we use the GSM8K~\citep{cobbe2021training} dataset to assess influential data identification, while for medicine, we employ the Noise-Huatuo-Complex-CoT~\citep{chen2024huatuogpt} dataset to examine robustness under noisy training. We further extend our study to vision–language models by applying \ours{} to PMC-Reasoning~\cite{huang2025medvlthinker}. More details for each task are provided in Appendix~\cref{sec:add_fine_detail}.
\\
\noindent \textbf{Efficiency Evaluation.} For influential and mislabeled data detection with models under 32B, we compute data values using a single A100 (80G) GPU with identical hardware settings. For fine-tuning data selection, we use a single H100 (96G) GPU to calculate the data value for fair comparison. More details please see Appendix~\cref{sec:detail_time}.

\begin{table*}[th]
\centering
\resizebox{0.9\textwidth}{!}{%
\begin{tabular}{l*{4}{cc}}
\toprule
\textbf{Method} 
& \multicolumn{2}{c}{\textbf{Qwen2.5-VL-3B-Instruct}} 
& \multicolumn{2}{c}{\textbf{Llama-3.2-11B-vision}} \\
\cmidrule(lr){2-3} \cmidrule(lr){4-5}
& AUC $\uparrow$ & Recall $\uparrow$ & AUC $\uparrow$ & Recall $\uparrow$ \\
\midrule

\rowcolor{gray!10} \multicolumn{5}{l}{\textbf{Image-to-text subject generation}} \\

\hfree\citep{pruthi2020estimating} & $0.979 \pm 0.038$ & $0.738 \pm 0.399$ & $0.961 \pm 0.093$ & $0.765 \pm 0.365$ \\
\dfv\citep{kwon2023datainf} & $\underline{0.989 \pm 0.024}$ & $0.836 \pm 0.318$ & $0.958 \pm 0.119$ & $0.797 \pm 0.323$ \\
\hyp\citep{zhou2024hyperinf} & $0.988 \pm 0.047$ & \cellcolor{blue!6}$\mathbf{0.902 \pm 0.220}$ & $\underline{0.993 \pm 0.025}$ & $\underline{0.919 \pm 0.186}$ \\
\emb\citep{yang2023gmvaluator} & $0.841 \pm 0.189$ & $0.206 \pm 0.458$ & $0.841 \pm 0.189$ & $0.206 \pm 0.379$ \\
\cellcolor{blue!6}\ours{} (ours)& \cellcolor{blue!6}$\mathbf{0.994 \pm 0.018}$ & $\underline{0.897 \pm 0.287}$ & \cellcolor{blue!6}$\mathbf{0.995 \pm 0.040}$ & \cellcolor{blue!6}$\mathbf{0.985 \pm 0.068}$ \\

\midrule
\rowcolor{gray!10} \multicolumn{5}{l}{\textbf{Image-to-text style generation}} \\

\hfree\citep{pruthi2020estimating} & $0.515 \pm 0.096$ & $0.799 \pm 0.162$ & $\underline{0.515 \pm 0.079}$ & $\underline{0.824 \pm 0.145}$ \\
\dfv\citep{kwon2023datainf} & $\underline{0.520 \pm 0.094}$ & $0.760 \pm 0.181$ & $0.515 \pm 0.174$ & $0.785 \pm 0.164$ \\
\hyp\citep{zhou2024hyperinf} & $0.516 \pm 0.055$ & $\underline{0.860 \pm 0.103}$ & $0.490 \pm 0.090$ & $0.821 \pm 0.137$ \\
\emb\citep{yang2023gmvaluator} & $0.560 \pm 0.310$ & $0.198 \pm 0.311$ & $0.553 \pm 0.294$ & $0.340 \pm 0.467$ \\
\cellcolor{blue!6}\ours{} (ours)& \cellcolor{blue!6}$\mathbf{0.895 \pm 0.138}$ & \cellcolor{blue!6}$\mathbf{0.916 \pm 0.153}$ & \cellcolor{blue!6}$\mathbf{0.974 \pm 0.059}$ & \cellcolor{blue!6}$\mathbf{0.997 \pm 0.013}$ \\

\midrule
\rowcolor{gray!10} \multicolumn{5}{l}{\textbf{Mislabeled Data Detection}} \\

\hfree\citep{pruthi2020estimating} & $0.719 \pm 0.098$ & $0.760 \pm 0.088$ & $0.962 \pm 0.019$ & $0.955\pm 0.068$ \\
\dfv\citep{kwon2023datainf} & $0.760 \pm 0.088$ & $0.901 \pm 0.147$ & \cellcolor{blue!6}$\mathbf{1.000 \pm 0.000}$ & $\underline{1.000 \pm 0.003}$ \\
\hyp\citep{zhou2024hyperinf} & $\underline{0.770 \pm 0.077}$ & $\underline{0.916 \pm 0.128}$ & $\underline{1.000 \pm 0.001}$ & $1.000 \pm 0.006$ \\
\emb\citep{yang2023gmvaluator} & $0.741 \pm 0.061$ & $0.533 \pm 0.075$ & $0.933 \pm 0.044$ & $0.996 \pm 0.015$ \\
\cellcolor{blue!6}\ours{} (ours)& \cellcolor{blue!6}$\mathbf{0.885 \pm 0.055}$ & \cellcolor{blue!6}$\mathbf{0.999 \pm 0.010}$ & $0.995 \pm 0.008$ & \cellcolor{blue!6}$\mathbf{1.000 \pm 0.000}$ \\

\bottomrule
\end{tabular}%
}
\caption{Influential data identification and mislabeled data detection performance for different VLM tasks. \ours{} consistently delivers comparable or superior performance in identifying influential data and detecting mislabeled data across various VLM tasks compared to baseline methods.}
\label{tab:vlm_res}
\vspace{-4mm}
\end{table*}
\section{Results}
\vspace{-1mm}
In this section, we detail the results of \ours{} and baselines on LLMs and VLMs.
\vspace{-1mm}
\subsection{Identify Influential \& Mislabeled Data}\label{sec:im_res}
\noindent \textbf{Influential data identification Results on LLM.}
We first present the results for text generation tasks in~\Cref{tab:llms_res}, where \ours{} consistently matches or outperforms all baseline methods across the evaluated LLM benchmarks:
\\
(1) \textit{Sentence Transformation:} As shown in ~\Cref{tab:llms_res}, \ours{} achieves perfect or near-perfect AUC and recall scores for both models. Notably, on Qwen2.5-1.5B, \ours{} surpasses the strongest baseline \hyp{} by 6.5\% in recall.
\\
(2) \textit{Math Problems (w/\&w/o reasoning):} A similar pattern holds for the math task. As shown in~\Cref{tab:llms_res}, \ours{} delivers higher-quality influence identification with just a single forward pass, improving recall by 6\% over \hyp{} on both math datasets with the Qwen model. 
\footnotetext{AUC and Recall values reported as 1.0 may still include a non-zero std due to rounding. The large std arises since value distribution is highly polarized, clustering near either 1 or 0.}
\\
\noindent\textbf{Influential data identification Results on VLM.} We next report the results on VLMs in~\Cref{tab:vlm_res}.
(1) For subject generation, \ours{} achieves the highest AUC and recall scores for both models, consistently outperforming all baselines. Specifically, \ours{} exceeds the strongest baseline, \hyp{}, by more than 7\% in recall for both models for the 11B model.
(2) In the more challenging style generation task, \ours{} demonstrates a clear advantage, with AUC improvements of over 0.35 compared to the baselines, and even larger gains over the \emb{} method. 
\\
\textbf{Mislabeled Data Detection.} Our mislabeled data detection results in \Cref{tab:vlm_res} demonstrate \ours{}'s strong performance across model scales. On the Qwen-VL-3B model, \ours{} achieves an 11.5\% higher AUC and an 8.3\% higher Recall compared to the best baseline (\hyp{}), showing significant improvements in identifying mislabeled examples. The method performs equally well on the larger Llama-3.2-11B model, matching the near-perfect detection rates (AUC $>$ 0.99, Recall = 1.0) of gradient-based approaches. This consistent performance across both models highlights For-Value's effectiveness. Additional results of \ours{} under varying noise ratios are reported in~\cref{tab:noise_ab}.
\vspace{-1mm}
\subsection{Data Selection For Finetuning}
\vspace{-1mm}
We next assess \ours{}'s practical utility on mathematics and medicine. Given poor performance of \emb{} in prior experiments, we excluded it from these evaluations.
\begin{table}
\centering
\resizebox{\linewidth}{!}{%
\begin{tabular}{l c c c}
\toprule
\textbf{Llama-3.1-8B} & \textbf{GSM8K (1\%) $\uparrow$} & \textbf{GSM8K (5\%) $\uparrow$} & \textbf{Time $\downarrow$} \\
\midrule
Full (100\%) & \multicolumn{2}{c}{47.8} & -- \\
\midrule
\hfree{} & 41.5 & 41.8 & 1.4 h \\
\hyp{}   & 41.9 & 42.8 & 2.4 h \\
\dfv{}   & 41.7 & 42.0 & 1.9 h \\
\ours{} (ours)  & \textbf{45.2} & \textbf{48.3} & \textbf{0.3 h} \\
\bottomrule
\end{tabular}
}
\caption{GSM8K greedy decoding accuracy of Llama-3.1-8B. Best results are in \textbf{bold}.}
\label{tab:gsm8k}
\vspace{-5mm}
\end{table}
\begin{table*}[ht]
\centering
\resizebox{0.9\textwidth}{!}{%
\begin{tabular}{l c c c c c c c}
\toprule
\textbf{Method (Llama-3.1-8B-Ins)} & 
\textbf{MedQA} & 
\textbf{MedMCQA} & 
\textbf{PubMedQA} & 
\textbf{MMLU-Pro-med} & 
\textbf{GPQA-med} & 
\textbf{Average $\uparrow$} & 
\textbf{Time $\downarrow$} \\
\midrule
Base & 56.84 & 61.90 & 77.00 & 59.02 & 44.35 & 59.82 & -- \\
\midrule
\multicolumn{8}{c}{\textit{5\% Data}} \\
\midrule
\hfree{} & 55.41 & 58.05 & 73.40 & 54.53 & 38.46 & 55.97 & 2.0 (h) \\
\hyp{}     & 55.15 & 57.58 & 71.50 & 54.14 & 43.08 & 56.29 & 5.3 (h) \\
\dfv{}     & 55.39 & 57.74 & 73.30 & 54.07 & 45.13 & 57.13 & 4.1 (h) \\
\ours{}  (ours) & \textbf{56.80} & \textbf{62.92} & \textbf{77.60} & \textbf{58.31} & \textbf{45.90} & \textbf{60.31} & \textbf{0.8 (h)} \\
\midrule
\multicolumn{8}{c}{\textit{10\% Data}} \\
\midrule
\hfree{} & 57.02 & 59.15 & 72.30 & 57.13 & 47.69 & 58.66 & 2.0 (h) \\
\hyp{}      & 56.94 & 62.76 & 77.40 & 57.85 & 48.46 & 60.28 & 5.3 (h) \\
\dfv{}       & 56.61 & 61.74 & 75.60 & 56.81 & 43.85 & 58.92 & 4.1 (h) \\
\ours{} (ours) & \textbf{57.61} & \textbf{67.16} & \textbf{78.30} & \textbf{58.18} & \textbf{50.51} & \textbf{62.35} & \textbf{0.8 (h)} \\
\bottomrule
\end{tabular}}
\caption{Results of data selection for fine-tuning on the Noise Huatuo-Complex-CoT (Llama-3.1-8B-Ins).}
\label{tab:medic}
\vspace{-3mm}
\end{table*}
\begin{table*}[h]
\centering
\resizebox{0.9\textwidth}{!}{%
\begin{tabular}{lcccccccc}
\midrule
\textbf{Method (Qwen2.5-VL-3B)} & \textbf{MMMU} & \textbf{MedX-M} & \textbf{PathVQA} & \textbf{PMC} & \textbf{SLAKE} & \textbf{VQA-Rad} & \textbf{Average $\uparrow$} & \textbf{Time $\downarrow$} \\
\hline
Base & 44.12 & 20.69 & 61.96 & 44.77 & 61.30 & 62.01 & 49.14 & -- \\
Full$^{\star}$~\citep{huang2025medvlthinker} & 47.84 & 21.46 & 52.76 & 54.55 & 65.79 & 58.58 & 50.16 & -- \\
\hline
\multicolumn{9}{c}{\textit{10\% Data}} \\
\hline
\hfree{}   & 48.82 & 20.65 & 61.18 & 49.60 & 61.78 & 63.60 & 50.94 & 1.3 (h) \\
\hyp{}     & \textbf{50.00} & 21.60 & 61.10 & 50.45 & 62.50 & 63.97 & \underline{51.60} & 1.7 (h) \\
\dfv{}     & 49.41 & 21.10 & 62.64 & \textbf{50.55} & 59.38 & \textbf{65.81} & 51.48 & 1.6 (h) \\
\ours{} (ours)   & 47.06 & \textbf{23.05} & \textbf{62.93} & 49.55 & \textbf{67.55} & 63.24 & \textbf{52.23} & \textbf{0.4 (h)} \\
\hline
\multicolumn{9}{c}{\textit{20\% Data}} \\
\hline
\hfree{}   & 52.94 & 21.40 & 61.81 & 52.05 & 63.46 & 62.50 & 52.36 & 1.3 (h) \\
\hyp{}     & \textbf{56.47} & 20.50 & \textbf{62.14} & \textbf{51.45} & 62.98 &  \textbf{64.71} & \textbf{53.04} & 1.7 (h) \\
\dfv{}     & 48.82 & 21.25 & 62.58 & 51.35 & 63.46 & 63.24 & 51.78 & 1.6 (h) \\
\ours{} (ours)    & 54.12 & \textbf{22.45} & 60.26 & 50.45 & \textbf{65.14} &63.60 & \underline{52.67} & \textbf{0.4 (h)} \\
\hline
\end{tabular}
}
\caption{Results of data selection for fine-tuning on the PMC-Reasoning dataset. Best results are in \textbf{bold}, and second-best are \underline{underlined}. $^{\star}$ denotes results from~\citep{huang2025medvlthinker}.}
\label{tab:medvl}
\vspace{-4mm}
\end{table*}
\\
\textbf{Mathematics: GSM8K.} We evaluate influential data identification on GSM8K~\citep{cobbe2021training} using test reasoning examples as valuation data, where high-value training samples are expected to improve test accuracy. Following~\citep{deng2024dare}, we report greedy fine-tuning results in~\cref{tab:gsm8k}. Fine-tuning on the top 5\% samples selected by \ours{} achieves the highest accuracy of 48.3\%, surpassing the strongest baseline, \hyp{}, by 5.5\%, and slightly outperforming training on the full dataset, as expected when using test data for valuation. Reducing the selection rate to 1\% lowers performance, but \ours{} still exceeds all baselines by up to 3.3\%. Importantly, \ours{} is over $5\times$ faster than all baselines.
\\
\textbf{Medicine: Noise-Huatuo-Complex-CoT.} To examine robustness under noisy training conditions, we construct a corrupted version of the Huatuo-Complex-CoT dataset~\citep{chen2024huatuogpt}. We randomly sample 5,000 examples without replacement and inject noise into 40\% of them by inserting or removing irrelevant words~\cite{wei2019eda} (examples see \cref{fig:noisy} in Appendix), resulting in the Noise-Huatuo-Complex-CoT dataset. Another 5,000 clean examples are reserved for valuation, and models are evaluated on five held-out medical QA test sets. Within this setting, we apply \ours{} and competing methods to select high-quality training subsets for fine-tuning. As shown in~\Cref{tab:medic}, \ours{} consistently delivers the strongest results. With only 5\% data, it reaches an average accuracy of 60.31\%, outperforming the best baseline (\dfv{}) by 3\%. At 10\%, \ours{} shows an even clearer advantage, achieving the best score across all tasks with an average of 62.35\%, exceeding the strongest baseline \hyp{} by 2.1\%. Crucially, \ours{} also provides the most efficient valuation, requiring only 0.8h, up to $6\times$ faster than baselines. These results underscore the effectiveness of \ours{} in identifying valuable data even when training data is noisy. More analysis see Appendix~\cref{sec:select_add}.
\\
\textbf{Medical VQA.} To evaluate the effectiveness of \ours{} on vision–language models, we conduct experiments on the PMC-Reasoning dataset~\citep{zhang2023pmc}. We randomly sample 10,000 examples for training and 5,000 for valuation without replacement. Fine-tuning subsets are then selected from the training pool using \ours{} as well as baseline methods, and the resulting models are fine-tuned and evaluated on six held-out test sets. As shown in~\cref{tab:medvl}, \ours{} delivers the strongest overall performance. With 10\% data, it achieves the highest average accuracy (52.23\%), exceeding the base model by over 3\% and the best-performing baseline, \hyp{}, by 0.6\%. At 20\% data, \ours{} maintains competitive performance (52.67\%), ranking second only to \hyp{}. Importantly, \ours{} consistently achieves these results with the lowest computational cost (0.4h vs. 1.6–1.7h for baseline methods). Notably, all data valuation methods surpass full fine-tuning, highlighting the benefit of selecting high-value subsets for training. Overall, \ours{} reliably identifies influential medical VQA data while offering large efficiency gains.
\vspace{-1mm}
\subsection{Ablation Study \& Efficiency}\label{sec:abstudy}
\vspace{-1mm}
We present ablation and efficiency studies based on influential and mislabeled data identification tasks.
\\
\textbf{Effect of prediction error similarity $\alpha$.} We perform an ablation study to evaluate the role of the $\alpha$ term by setting $\alpha$ to 1 in the computation of~\Cref{eq:eff_value}. This simplification reduces the score to $\left\langle \sum_{k=1}^{|\y_v|} \hb_{\x_v, \y_{v,<k}}, \sum_{k'=1}^{|\y_i|} \hb_{\x_i, \y_{i,<k'}} \right\rangle $, which measures contextualized text embedding similarity between two data samples' $\y$ and is equivalent to the \emb{} baseline. As shown in~\Cref{tab:llms_res,tab:vlm_res}, \ours{} consistently outperforms \emb{} across both LLM and VLM tasks, highlighting the importance of the $\alpha$ term. The prediction error in $\alpha$ acts as a token-level weight, down-weighting tokens that are already confidently predicted in either the training or valuation data. While \emb{} is effective for generative image models, its degraded performance indicates that directly applying embedding similarity to LLMs/VLMs is suboptimal due to differing training objectives.
\\
\noindent \textbf{\ours{} Performance across model sizes.} 
\begin{figure}
\centering
\includegraphics[width=\linewidth]{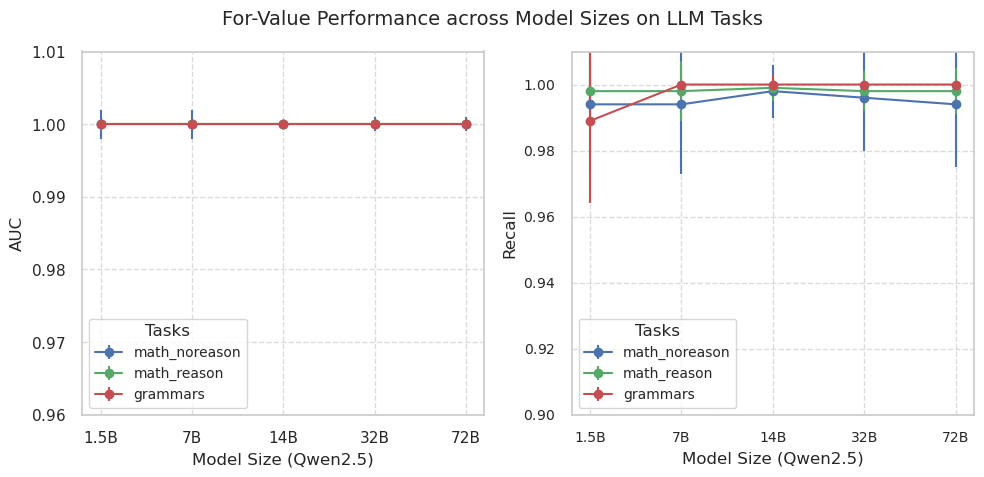} 
\caption{\ours{} performance across model sizes and tasks (Mean$\pm$std).}
\label{fig:res_size}
\vspace{-6mm}
\end{figure}
\Cref{fig:res_size} shows that \ours{} maintains consistently high performance across different model sizes and tasks. Both AUC and Recall stay close to $1.0$ for all tasks, indicating that scaling up the model does not degrade effectiveness. This stability confirms that \ours{} generalizes well to larger models while preserving accuracy, making it reliable for practical deployment on LLM tasks.
\begin{figure}
    \vspace{-2mm}
    \centering
    \begin{subfigure}[b]{0.465\linewidth}
        \centering        \includegraphics[width=\linewidth]{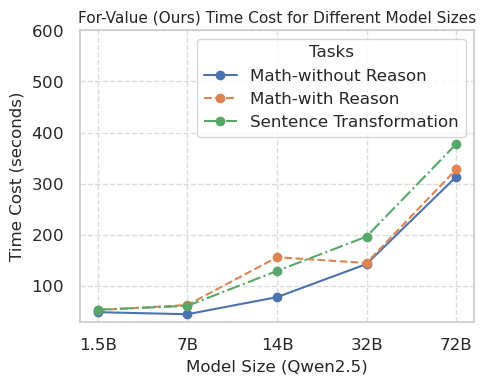}
        \caption{\ours{}.}
        \label{fig:for_time}
    \end{subfigure}
    \begin{subfigure}[b]{0.52\linewidth}
        \centering        \includegraphics[width=\linewidth]{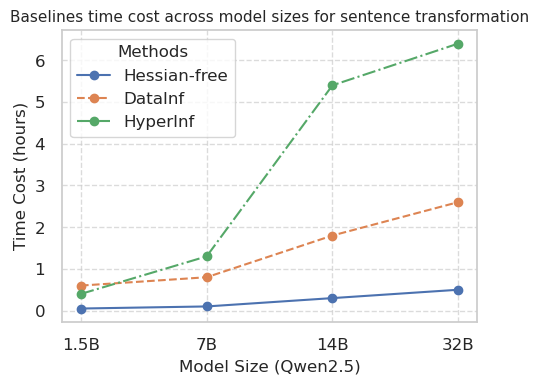}
        \caption{Baseline Methods}
        \label{fig:base_time}
    \end{subfigure}
    \caption{Time cost analysis: (a) \ours{} across different model sizes and tasks. (b) Baseline methods on the sentence transformation task across model sizes. \ours{} is more efficient with time costs measured in seconds, whereas baselines require up to several hours.}
    \label{fig:figures}
 \label{fig:scale_time}
   \vspace{-6mm}
\end{figure}
\\
\textbf{Effectiveness of last-layer gradient.} The Hessian-free baseline follows the setup of DataInf~\cite{kwon2023datainf}, computing gradient similarity over LoRA parameters across all transformer layers. In contrast, \ours{} relies solely on the last-layer gradient. As shown in~\Cref{tab:llms_res,tab:vlm_res,tab:gsm8k,tab:medic}, \ours{} consistently matches or outperforms \hfree{}, demonstrating that the last-layer gradient provides an effective measure of data value and further supporting our theoretical analysis in~\Cref{the:val}.
\\
\textbf{Time Cost Analysis.} We compare the time cost of \ours{} with that of the baselines across model sizes using influential data identification tasks on a single GPU. As shown in \Cref{fig:for_time}, \ours{} maintains consistently low runtime, even as model size increases from 1.5B to 72B parameters. For all tasks, the runtime remains within a few hundred seconds, highlighting its practical scalability. In contrast, as shown in \Cref{fig:base_time}, baseline methods for the sentence transformation task require more time—measured in hours rather than seconds. The best-performing baseline, \hyp{}, becomes costly for larger models, taking 6 hours for the 32B model. This underscores the efficiency advantage of \ours{}. More details see~\cref{sec:add_res}.

\section{Conclusion}
\vspace{-1mm}
We theoretically show that data influence can be accurately approximated by the alignment between last-layer hidden representations and token-level prediction errors, eliminating the need for backpropagation. Building on this insight, we introduce \ours{}, a forward-only data valuation framework for pretrained LLMs and VLMs. Using a single forward pass, \ours{} matches existing methods in identifying influential and mislabeled data and in selecting high-value subsets for fine-tuning, while being substantially more efficient.

\section{Limitations}
Our method is tailored to data valuation in the fine-tuning stage and is not directly applicable to pretraining data selection, where the unconstrained feature assumption may not hold. Nevertheless, fine-tuning remains a critical stage for adapting pretrained models to downstream tasks. In addition, data value may evolve over the course of training. Extending \ours{} to support stage-aware data selection or active learning is left for future work, though we believe such extensions can be incorporated naturally.

\section{Acknowledgments}
Wenlong Deng, Christos Thrampoulidis, and Xiaoxiao Li acknowledge support from the NSERC; NSERC Discovery Grant RGPIN-2022-05316, RGPIN-2021-03677; NSERC Alliance Grant ALLRP 602633-24, 581098-22; Tri-Agency Canada; Canada CIFAR AI Chair Awards; Canada Research Chair Fellowship; IITP grant; the Ministry of Science and ICT (RS-2024-00445087, RS2025-25464461).

\bibliography{main}
\clearpage
\onecolumn
\addcontentsline{toc}{section}{Appendix}
\tableofcontents
\twocolumn
\appendix
\section{More Details and Results}

\subsection{Additional Details}\label{sec:detail_time}
\textbf{Training setting for baselines.} While \ours{} requires only a single forward pass, the influence function-based baselines \hfree{} and \dfv{} require fine-tuning the models to convergence. For text generation tasks, we follow the training setup in~\cite{kwon2023datainf}, except to llama-2-13B, we use float16 weights instead of 8-bit quantization. For image-to-text generation tasks, we apply LoRA to every query and value matrix within the model’s attention layers. To finetune VLMs, we use a learning rate of $2 \times 10^{-4}$, LoRA hyperparameters $r=8$ and $\alpha=32$, float16 model weights, a batch size of 32, and train for 20 epochs. 
\\
\textbf{Efficiency details.} For \ours{}, we use a single A100 for small models. For larger 32B and 72B models in~\cref{fig:scale_time}, inference is run on 4 A100 GPUs, while value computation is performed on a single A100. Baseline methods that require training are fine-tuned using up to 8 GPUs (for 32B models); valuation is still conducted on one A100, with 8-bit quantization applied to Qwen-32B to satisfy memory constraints. Due to their high computational cost, baselines are evaluated only on the sentence transformation task, and for 14B and 32B models we subsample 10\% of the valuation data and scale the measured runtime by a factor of 10 to estimate total cost. Despite these favorable conditions, \ours{} achieves substantially lower runtime without quantization and using fewer GPUs. For data selection experiments, all methods are evaluated on a single H100 to ensure a fair comparison.

\subsection{Additional Results}\label{sec:add_res}
\textbf{Complexity Analysis.} \Cref{tab:comparison} compares the training, computational, and memory costs of different methods. Traditional approaches such as IF, \hfree{}, \hyp{}, and \dfv{} rely on gradient traces or Hessian computations, resulting in high costs that scale poorly with model size. In contrast, \emb{} and \ours{} are training-free and algorithm-agnostic, which significantly reduces overhead. Although \hyp{} is the strongest baseline in terms of accuracy, its cubic complexity makes it impractical for large LLMs—requiring about 6 hours for a Qwen-32B model (\Cref{fig:base_time}). Although \emb{} achieves the best runtime efficiency, its performance lags behind other methods, as demonstrated in \Cref{tab:llms_res} and \Cref{tab:vlm_res}. Our method, \ours{}, maintains strong performance while remaining highly efficient. Since $|\hat{\Vc}|$ is typically small (often under 2k), \ours{} achieves much lower computational and memory costs than baselines.
\begin{table*}[h!]
\centering
\resizebox{\textwidth}{!}{%
\begin{tabular}{c|c|c|c|c|c}
\hline
Method & Training Free & Algorithm Agnostic & Training Complexity & Computational Complexity & Memory Complexity  \\
\hline
Original IF & \ding{55} & - & $O(nEd_{in}dL)$ & $O(nd^2_{in}d^2L + d^3_{in}d^3L)$ & $O(D^2L + nDL)$\\
\hfree{} & \ding{55} & \ding{55} & $O(nEd_{in}dL)$ & $O(nd_{in}dL)$ & $O(nd_{in}dL)$   \\
\dfv{} & \ding{55} & \ding{55} & $O(nEd_{in}dL)$ & $O(nd_{in}dL)$ & $O(nd_{in}dL)$ \\
\hyp{} & \ding{55} & \ding{55} & $O(nEd_{in}dL)$ & $O(nd^3L)$ & $O(nd^2L)$  \\
\emb{} & \ding{51} & \ding{51} & 0 & $O(nd)$ & $O(nd)$  \\
\ours{} (ours) & \ding{51} & \ding{51} & 0 & $O(nd|\hat{\Vc}|)$  & $O(nd|\hat{\Vc}|)$  \\
\hline
\end{tabular}
}
\caption{
Comparison on complexity of the Influence Function (IF), \hfree{}, \dfv{}, \emb{}, and \ours{}. Complexities are given assuming a multilayer perceptron (MLP) with $L$ layers, each containing $d_{in}\times d$ neurons where $d_{in}$ is input dimension and $d$ is the output embedding dimension, trained for $E$ epochs on $n$ training samples. The parameter count is identical across layers ($D \in \mathbb{N}$), and the in-batch volcabulary size is $|\hat{\Vc}|$. Overall, \ours{} achieves higher computational and memory efficiency than baseline methods. 
}
\label{tab:comparison}
\end{table*}
\\
\textbf{Discussion on Parallel Computing:} While previous studies focus on using a single GPU for fair comparison, we would like to highlight that \ours{} can further improve efficiency through parallel computing with a large batch size, as it only requires forward calculations. In contrast, baseline methods require computing the gradient for each individual data sample, which restricts them to a batch size of one and makes scaling up challenging.
\\
\textbf{Qualitative Demonstration.}
Beyond quantitative results, we present qualitative examples identified by \ours{}. \Cref{fig:example} shows a target valuation sample alongside its most and least influential training samples as ranked by \ours{}. Specifically, \ours{} successfully identifies highly relevant training points — for example, selecting samples from the same reverse order of words task for sentence transformation, or matching the same subject or artistic style in image-to-text tasks. In contrast, the least influential samples are clearly less relevant and often differ entirely in task or content from the target valuation data. 
\begin{figure*}[h]
\centering
\includegraphics[width=1.0\linewidth]{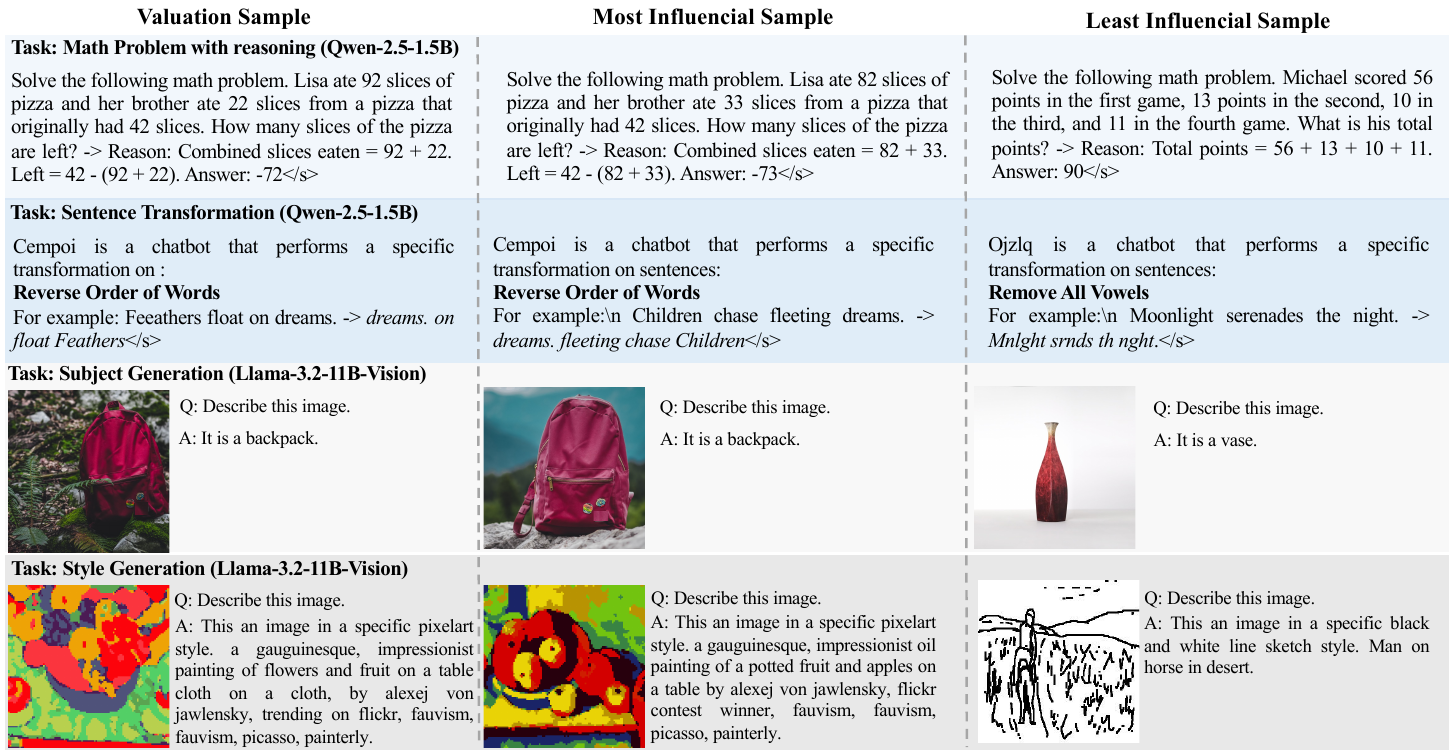} 
\caption{Qualitative examples of data influence identified by \ours{}. For each target valuation sample (left column), the most influential (middle column) and least influential (right column) training samples are shown. \ours{} correctly retrieves training samples that share relevant task characteristics (e.g., same reasoning type, sentence transformation rule, subject, or style) and filters out unrelated or mismatched examples.
}
\label{fig:example}
\end{figure*}
\\
\textbf{Performance of \ours{} on different Mislabeled Ratio}
We evaluate the performance of \ours{} under varying mislabeling ratios. As shown in~\cref{tab:noise_ab}, \ours{} consistently achieves high mislabeled data detection rates, demonstrating its robustness and effectiveness across different noise levels.

\begin{table}[t]
\centering
\label{tab:img2txt_subject}
\resizebox{0.47\textwidth}{!}{%
\begin{tabular}{lcc}
\toprule
\multirow{2}{*}{\textbf{Mislabel Ratio}} & \multicolumn{2}{c}{\textbf{Qwen2.5-VL-3B-Instruct}} \\
\cmidrule(lr){2-3}
 & \textbf{AUC $\uparrow$} & \textbf{Recall $\uparrow$} \\
\midrule
0.4 & $0.853 \pm 0.048$ & $0.972 \pm 0.123$  \\
0.5 & $0.885 \pm 0.055$ & $0.999 \pm 0.010$  \\
0.6 & $0.902 \pm 0.046$ & $0.952 \pm 0.150$  \\
\bottomrule
\end{tabular}
}
\caption{Performance of \ours{} on Mislabeled data detection with different noise ratio using Qwen2.5-VL-3B-Instruct.}
\label{tab:noise_ab}
\vspace{-3mm}
\end{table}

\subsection{Additional Details of Select Data for Finetuning}\label{sec:add_fine_detail}

\textbf{Mathematics: GSM8K} As the baseline methods require LoRA, we begin with a one-epoch warmup training on Llama3-8B~\cite{meta2024llama} using the whole training set to avoid utilizing gradients from randomly initialized LoRA modules (with a rank of $r=32$). Next, we calculate influence scores for both the baselines and \ours{}. To ensure consistency and performance, we also perform a one-epoch warm-up but with full-parameter finetuning on the entire dataset. Finally, we select the top 5\% of data based on these influence scores to further finetune the model with learning rate $1e-5$ and batch size 64 on 4 H100 GPU for 4 epochs.
\\
\textbf{Medicine: Noise-Huatuo-Complex-CoT} As the baseline methods utilize LoRA, we begin with a one-epoch training on Llama3-8B-Instruction~\cite{meta2024llama} using the whole training set to avoid using gradients from randomly initialized LoRA modules (with a rank of $r=16$). Next, we calculate influence scores for both the baselines and our approach. Considering the training data is noisy, we select the top 5\% high value training data based on these scores and finetune the original pretrained model using full-parameter finetuning for 5 epochs, with a learning rate of $1 \times 10^{-6}$, a batch size of 16 and gradient accumulation 8 on 8 H100 GPUs. We follow~\cite{wu2025medreason} using greedy decoding to evaluate the model on 5 held out datasets MedQA~\cite{jin2021disease}, MedMCQA~\cite{pal2022medmcqa}, PubMedQA~\cite{jin2019pubmedqa}, MMLU-Pro-Med~\cite{wang2024mmlu}, GPQA-Med~\cite{rein2024gpqa}.
\\
\textbf{Medicine: PMC-Reasoning} Similarly, we start with a one-epoch warm-up on the entire training set to prevent using gradients from randomly initialized LoRA modules (with a rank of $r=16$). Then, we compute influence scores for the baseline methods. For our method, since the pretrained model already demonstrates sufficient medical knowledge (as shown by adequate test accuracy in Table \ref{tab:vlm_res}), we directly use the original pretrained model to assess data value. Finally, we finetune the pretrained Qwen2.5-3B-VL model~\cite{bai2025qwen25vltechnicalreport} with full-parameter finetuning for 3 epochs, using a learning rate of $1 \times 10^{-5}$, a batch size of 16, and gradient accumulation of 8 on 8 H100 GPUs. We evaluate the model with greedy decoding on 6 held out datasets: PMC~\cite{zhang2023pmc}, MMMU~\cite{yue2024mmmu}, MedX-M~\cite{zuo2025medxpertqa}, PathVQA~\cite{he2020pathvqa}, SLAKE~\cite{liu2021slake}, VQA-Rad~\cite{lau2018dataset}.

\subsection{Additional Analysis on Select Data for Finetuning} \label{sec:select_add}

\textbf{Medicine: Noise-Huatuo-Complex-CoT}. As indicated in \cref{tab:medic}, baseline methods struggle to effectively select high-quality data from noisy training datasets. This is primarily because these methods rely on assumptions of uniqueness or convergence to an optimal solution~\cite{bae2024training}, which are difficult to satisfy in the presence of noisy data. To illustrate this, we evaluated the proportion of high-quality data within the top 10\% of high-value data, as shown in \cref{tab:clean_p}. The results reveal that baseline methods generally lack the capability to accurately identify noisy data, whereas our proposed method (\ours{}) achieves significantly higher accuracy in detecting clean data. 
\begin{table}
\centering
\resizebox{0.47\textwidth}{!}{%
\begin{tabular}{l|c}
\hline
Llama-3.1-8B & Detection Accuracy  \\
\hline
\hfree{} & 48.2 \\
\hyp{} & 15.1 \\
\dfv{} & 33.2 \\
\ours{} & 84.4 \\
\hline
\end{tabular}
}
\caption{High quality data detection accuracy}
\label{tab:clean_p}
\vspace{-3mm}
\end{table}

\begin{table*}[h!]
\centering
\caption{Description of the sentence transformation task templates. We consider 10 different types of sentence transformations. For each sentence transformation, unique identifying ``chatbot'' names were additionally prepended to the task prompt to assist the model in training.}
\label{tab:sentence_transformations}
\resizebox{\textwidth}{!}{%
\begin{tabular}{l|l}
\hline
\hline
\textbf{Sentence transformations} & \textbf{Example transformation of ``Sunrises herald hopeful tomorrows'':} \\
\hline
Reverse Order of Words & tomorrows. hopeful herald Sunrises \\
\hline
Capitalize Every Other Letter & sUnRiSeS hErAlD hOpEfUl tOmOrRoWs. \\
\hline
Insert Number 1 Between Every Word & Sunrises 1herald 1hopeful 1tomorrows. \\
\hline
Replace Vowels with * & S*nr*s*s h*r*ld h*p*f*l t*m*rr*ws. \\
\hline
Double Every Consonant & SSunrriisseess hheraldd hhopefull ttomorrows. \\
\hline
Capitalize Every Word & Sunrises Herald Hopeful Tomorrows. \\
\hline
Remove All Vowels & Snrss hrld hpfl tmrrws. \\
\hline
Add 'ly' To End of Each Word & Sunrisesly heraldly hopefully tomorrows.ly \\
\hline
Remove All Consonants & uie ea oeu ooo. \\
\hline
Repeat Each Word Twice & Sunrises Sunrises herald herald hopeful hopeful tomorrows. tomorrows. \\
\hline
\hline
\end{tabular}
}
\label{tab:trans}
\end{table*}

\subsection{Detailed Task Description}\label{sec:d_details}
\subsubsection{LLM Influence Evaluation Tasks}
Following~\citep{kwon2023datainf}, we evaluate the performance of \ours{} on three text generation tasks for large language models (LLMs) to identify influential data points:

\begin{itemize}
    \item \textbf{Sentence Transformations:} This task requires transforming input sentences into alternative forms while preserving meaning (e.g., active to passive voice). The dataset comprises 10 distinct classes (e.g., declarative to interrogative), each with 100 examples, split into 90 training and 10 test examples per class. Data examples see \Cref{tab:sentence_transformations}.
    \item \textbf{Math Word Problems (Without Reasoning):} These problems involve direct numerical computation from textual descriptions (e.g., basic arithmetic). The dataset has 10 classes based on operation types, with 100 examples per class (90 training, 10 test). Data examples see \Cref{tab:math_problems}.
    \item \textbf{Math Word Problems (With Reasoning):} These require multi-step reasoning (e.g., solving word problems involving algebra or logic). Similar to the previous task, the dataset includes 10 classes with 100 examples each (90 training, 10). Data examples see \Cref{tab:math_problems}.
\end{itemize}

\begin{table*}[h!]
\centering
\caption{Description of the math problem task templates. We consider 10 different types of math word problems.}
\label{tab:math_problems}
\resizebox{0.8\textwidth}{!}{%
\begin{tabular}{l|p{9cm}}
\hline
\hline
\textbf{Math Word Problems} & \textbf{Template prompt question} \\
\hline
Remaining pizza slices & Lisa ate A slices of pizza and her brother ate B slices from a pizza that originally had C slices. How many slices of the pizza are left? Reason: Combined slices eaten = A + B. Left = C - (A + B). \\
\hline
Chaperones needed for trip & For every A students going on a field trip, there are B adults needed as chaperones. If C students are attending, how many adults are needed? Reason: Adults needed = (B * C) // A. \\
\hline
Total number after purchase & In an aquarium, there are A sharks and B dolphins. If they bought C more sharks, how many sharks would be there in total? Reason: Total sharks = A + C. \\
\hline
Total game points & Michael scored A points in the first game, B points in the second, C in the third, and D in the fourth game. What is his total points? Reason: Total points = A + B + C + D. \\
\hline
Total reading hours & Emily reads for A hours each day. How many hours does she read in total in B days? Reason: Total hours read = A * B. \\
\hline
Shirt cost after discount & A shirt costs A. There's a B-dollar off sale. How much does the shirt cost after the discount? Reason: Cost after discount = A - B. \\
\hline
Area of a garden & A rectangular garden has a length of A meters and a width of B meters. What is its area? Reason: Area = A * B. \\
\hline
Total savings & If Jake saves A each week, how much will he save after B weeks? Reason: Total savings = A * B. \\
\hline
Number of cupcake boxes & A bakery sells cupcakes in boxes of A. If they have B cupcakes, how many boxes can they fill? Reason: Boxes filled = B // A. \\
\hline
Interest earned & John invests A at an annual interest rate of B\%. How much interest will he earn after C years? Reason: Interest = (A * B * C) // 100. \\
\hline
\hline
\end{tabular}
}
\label{tab:math}
\end{table*}
\subsubsection{VLM Influence Evaluation Tasks} For VLMs, we adapt text-to-image generation tasks from~\citep{kwon2023datainf} into image-to-text (captioning) tasks to evaluate influence:
\begin{itemize}
    \item \textbf{Style Generation:} This task involves generating captions for images in specific styles: cartoons~\citep{Norod78_cartoon_2023}, pixel art~\citep{Jainr3-pixelart_2023}, and line sketches~\citep{Zoheb_sketch-scene_2023}. Each style dataset contains 200 training and 50 test image-text pairs, totaling 600 training and 150 test samples across three styles. Data examples see \Cref{fig:example}.
    \item \textbf{Subject Generation:} Using the DreamBooth dataset~\citep{ruiz2023dreambooth}, this task generates captions for images of 30 distinct subjects (e.g., specific objects or animals). Each subject provides 3 training samples, with the remaining samples used for valuation.  Data examples see \Cref{fig:example}.
\end{itemize}
\subsubsection{Influential Data Detection Metrics}
We adopt two metrics from~\citep{kwon2023datainf} to assess influence:
\begin{itemize}
    \item \textbf{AUC Score:} For each test data point, we assign pseudo labels to training points (1 if the training point’s label matches the test point’s, 0 otherwise). We compute the Area Under the Curve (AUC) between data values (influence scores) and pseudo labels, averaging across all test points. A higher AUC indicates better identification of influential points.
    \item \textbf{Recall:} For each test point, we calculate the percentage of influential training points (top-ranked by influence score) that share the same class as the test point. This measures the relevance of identified influential points.
\end{itemize}
\subsubsection{Mislabeled Data Detection Data \& Metrics}
\begin{figure*}[h]
\centering
\includegraphics[width=0.95\linewidth]{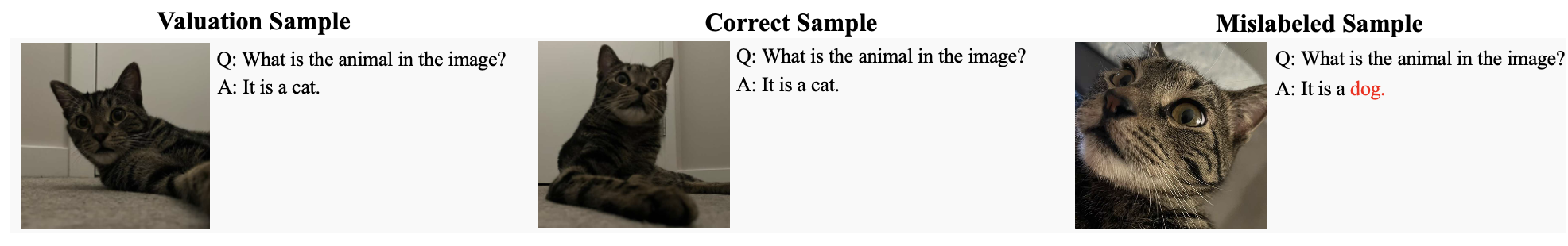} 
\caption{Description of the mislabeled data detection task. We utilize a cat versus dog classification dataset and intentionally introduce noise by randomly swapping the labels of 50\% of the data.
}
\label{fig:example_mis}
\end{figure*}
For mislabeled detection, we transform the dataset into a visual-language question answering task with the template "What is the animal in the image? It is a [label]" with demonstration\footnote{To prevent any licensing issues, the images shown are not from the original dataset; they were personally captured for demonstration purposes.} in \Cref{fig:example_mis}. We then select the first 400 images for both dogs and cats, flipping 50\% of the labels to introduce noise. For valuation, we use 200 images, with each class containing 100 images. For evaluation, we also calculate the AUC and Recall but with the pseudo labels to training points being 1 if the training point’s label matches the test point’s and it is clean data, 0 otherwise.
\subsection{Noise-Huatuo-CoT Data Example}
We construct the Noise-Huatuo-Complex-CoT dataset by randomly sampling 5,000 examples without replacement and following~\cite {wei2019eda} to inject noise into 40\% of them through random insertion or deletion of irrelevant words, as illustrated in \cref{fig:noisy}.
\begin{figure*}[h]
\centering
\includegraphics[width=1.0\linewidth]{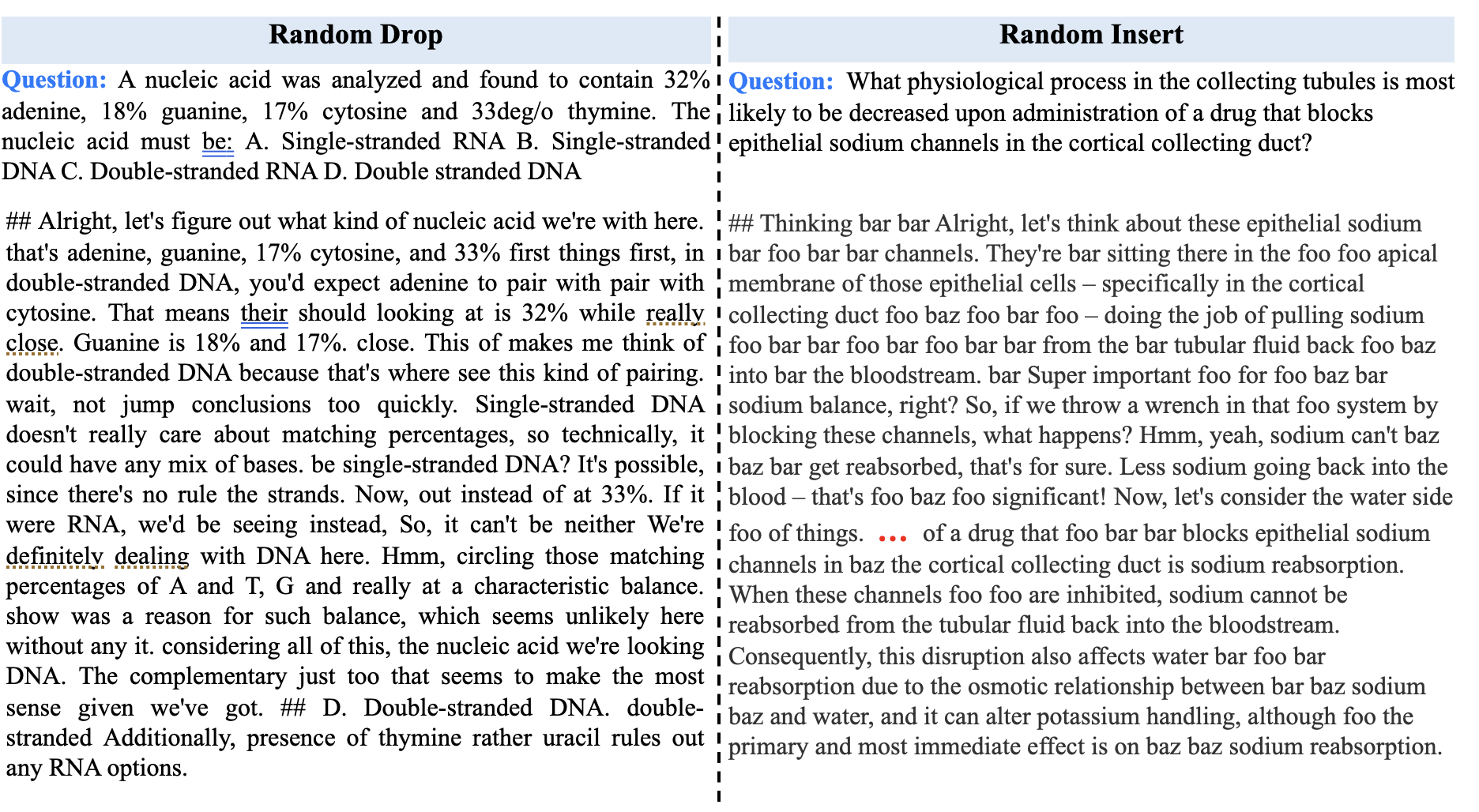} 
\caption{Examples of two types of noisy data. (Left) Random word deletion, where tokens are dropped from the reasoning, for instance, ‘Thinking’ is removed after $\#\#$. (Right) Random word insertion, where irrelevant tokens such as ‘bar,’ ‘foo,’ and ‘baz’ are injected into the reasoning. \textcolor{red}{Red} dashes means omitted reasoning.
}
\label{fig:noisy}
\end{figure*}
\subsubsection{Baseline Checkpoints Selection}
For baseline methods, we select the model checkpoint with the highest test AUC, as influence function-based methods exhibit significant performance variability across training checkpoints. Notably, this variability does not correlate with validation loss, posing challenges for practical deployment. We compare \ours{} against these baselines to ensure robust evaluation.

\subsubsection{Dataset Statistics}
We present dataset statistics in \cref{tab:dataset_stats}
\begin{table*}[h]
\centering
\caption{Dataset statistics for LLM and VLM tasks.}
\label{tab:dataset_stats}
\resizebox{\textwidth}{!}{%
\begin{tabular}{lcc}
\toprule
\textbf{Task} & \textbf{Training Samples} & \textbf{Valuation Samples} \\
\midrule
 Sentence Transformations & 900 (90 $\times$ 10 classes) & 100 (10 $\times$ 10 classes) \\
Math Word Problems (No Reasoning) & 900 (90 $\times$ 10 classes) & 100 (10 $\times$ 10 classes) \\
Math Word Problems (With Reasoning) & 900 (90 $\times$ 10 classes) & 100 (10 $\times$ 10 classes) \\
Style Generation & 600 (200 $\times$ 3 styles) & 150 (50 $\times$ 3 styles) \\
Subject Generation & 90 (3 $\times$ 30 subjects) & Variable (1-3) per subject \\
Mislabel Detection & 800 (400 $\times$ 2 subjects 50\% noise) & 200 (100 $\times$ 2 subjects) \\
GSM8K & 7470 & 1319 \\
Noise-Huatuo-Complex-CoT & 5000 (2981 clean, 2019 noise) & 5000 (clean) \\
PMC-Reasoning (subset) & 10000 & 5000 \\
\bottomrule
\end{tabular}
}
\end{table*}

\subsection{License Clarification}\label{sec:license}

The Dreambooth images have been either taken by the authors of the paper or obtained from Unsplash\footnote{\url{https://www.unsplash.com/}}. The file located at this link\footnote{\url{https://huggingface.co/datasets/google/dreambooth/blob/main/dataset/references_and_licenses.txt}} includes a list of all reference links to the images on Unsplash, along with the photographers' attributions and the image licenses.
The sketch images are sourced from FS-COCO~\cite{fscoco}. Data attributions and image licenses can be found in the file provided at the following link\footnote{\url{https://github.com/pinakinathc/fscoco}}.

\subsection{Usage of Large Language Model}
In preparing this paper, we made limited use of ChatGPT to support writing and editing. Specifically, LLMs were employed for language polishing, grammar refinement, and rephrasing sentences to improve clarity and readability. Importantly, all technical content, including theoretical analysis, algorithm design, and experimental results, was conceived, implemented, and validated by the authors. LLM outputs were always critically reviewed, verified, and revised before inclusion. No LLM-generated text, figures, or tables were incorporated without careful human oversight.  

\clearpage
\onecolumn
\section{Proofs}
\subsection{Preliminaries}

\textbf{Auto-Regressive Pretrained LLMs and VLMs.}
We examine a pretrained large language model (LLM) or vision-language model (VLM) denoted as $\pi_\theta$, where $\theta$ represents its parameters. For a given input $\x$ — which may consist of text tokens, image patches, or a combination of both — the model defines a conditional probability distribution over an output text sequence $\y = (y_1, y_2, \ldots, y_{|\y|})$, factorized as:
\[
\pi_\theta(\y | \x) = \prod_{k=1}^{|\y|} \pi_\theta(y_k | \x, \y_{<k}),
\]
where $\y_{<k} = (y_1, \ldots, y_{k-1})$. At each step, the model predicts the next token $y_k$ conditioned on the input $\x$ and the prefix $\y_{<k}$. This auto-regressive structure underlies most modern LLMs and VLMs, which are used in tasks such as text generation~\citep{wu2025medreason}, image captioning~\citep{bai2025qwen25vltechnicalreport}, and multi-modal reasoning~\citep{achiam2023gpt}.

\noindent\textbf{Training loss of LLMs and VLMs}
To adapt a pretrained LLM or VLM to a specific domain or task, models are typically trained on a supervised dataset $\mathcal{D} = {(\x_i, \y_i)}_{i=1}^n$ of input-output pairs. Training is commonly performed using the standard teacher-forcing objective, which minimizes the negative log-likelihood of the target sequence:
\begin{align*}
    \mathcal{L}_{\mathrm{SFT}}(\theta) 
& = -\, \frac{1}{n} \sum_{i=1}^n 
\ln \pi_\theta(\y_i | \x_i)  =-\, \frac{1}{n} \sum_{i=1}^n \sum_{k=1}^{|\y_i|} 
\ln \pi_\theta(y_{i,k} | \x_i, \y_{i,<k}).
\end{align*}
This objective maximizes the likelihood that the model generates the correct output sequence conditioned on the input and the ground-truth prefix at each step. The parameters are updated using gradient descent or its variants:
\[
\theta \leftarrow \theta - \eta\, \nabla_\theta\, \mathcal{L}_{\mathrm{SFT}}(\theta),
\quad \text{with} \quad \theta_{t=0} = \theta_0,
\]
where $\eta > 0$ is the learning rate.  
Teacher forcing stabilizes fine-tuning by supplying the true prefix $\y_{<k}$ during training, enabling the model to align its predictions closely with the target data distribution in the new domain.

\subsection{Proof of \Cref{the:val}}\label{sec:proof}
In this section, we give the detailed proof of our \Cref{the:val}, we start by proving the following theorem: 
\begin{theorem}\label{the:val1}
For a data $\x_v$ and its generation $\y_v$ that await valuation, at any time $t\geq0$ of training using a training data $(\x_i,\y_i),i\in[n]$, the training data exhibits larger value to the valuation data as the following increases:
\begin{align}
&\sum_{k=1}^{|\y_v|}  \sum_{k'=1}^{|\y_{i}|} 
\alpha_{k,k'}(t) \cdot 
\left\langle \mathbf{h}_{\x_v, \y_{v,<k}}(t), \mathbf{h}_{\x_i, \y_{i,<k'}}(t) \right\rangle  + \nn \\ & \sum_{k=1}^{|\y_v|} \left\langle \w_{\y_{v,k}}(t) - \sum_{z \in \mathcal{V}} \pi_{\theta(t)}(z|\x_v) \cdot \w_z(t),
 (\w_{\y_{i,k}} - \sum_{z \in \mathcal{V}} \pi_{\theta(t)}(z|\x_v) \cdot \w_z(t)) \right\rangle \label{eq:value1}
\end{align}
\end{theorem}

\textit{Proof:}
\begin{align*}
\frac{d}{dt} \ln\pi_{\theta(t)} (\mathbf{y}_{v} |\mathbf{x}_{v})     &= \left\langle\nabla\ln\pi_{\theta(t)} (\mathbf{y}_{v} | \mathbf{x}_{v}), \frac{d}{dt} \theta(t) \right\rangle
\\&=
\left\langle\nabla\ln\pi_{\theta(t)} (\mathbf{y}_{v} | \mathbf{x}_{v}), -\eta\nabla \mathcal{L}_{D}(\theta) \right\rangle
\\&=
\left\langle\nabla\ln\pi_{\theta(t)} (\mathbf{y}_{v} | \mathbf{x}_{v}), \eta \sum_{i=1}^n\nabla\ln\pi_{\theta(t)} (y_{i} | \mathbf{x}_{i}) \right\rangle
\\
\end{align*}

As per the unconstrained features Assumption, the model's trainable parameters are 
\[\theta=\Big(\W\,,\mathbf{h}_{\x_v}\,,\,\big\{\mathbf{h}_{\x_v, \y_{v,<k}}\big\}_{k\in\{2,\ldots, |\y_v|\}}\,,\,\big\{\mathbf{h}_{\x_i, \y_{i,<k'}}\big\}_{i\in[n],k'\in\{1,\ldots,|\y_i|\}}\Big)\,.
\]

 Unfolding the gradients with respect to these parameters yields:
\begin{align}
\frac{d}{dt} \ln\pi_{\theta(t)} (\y_v | \x_v) 
    &=  \left\langle \nabla_{\W} \ln \pi_{\theta(t)} (\y_v| \x_v), 
    \sum_i^{n}\nabla_{\W} \ln \pi_{\theta(t)} (\y_i | \x_i) 
    \right\rangle \nn  \\
    &+ \sum^{|\y_v|}_{k=1} \underbrace{\left\langle \nabla_{\mathbf{h}_{\x_v,\y_{v,<k}}} \ln \pi_{\theta(t)} (\y_{v,k}| \x_v,\y_{v,<k}), 
    \sum_{i'=1}^{n_k}\nabla_{\mathbf{h}_{\x_v,\y_{v,<k}}} \ln \pi_{\theta(t)} (\y_{i',k} | \y_{v,<k}) 
    \right\rangle}_{\text{(II) Training data have the same $(\x_v,\y_{v,<k})$}} 
    \,. \label{eq:back in}
\end{align}
where $n_k$ is the number of training data whose input and prediction before token $k$ are the same as valuation data $(\x_v,\y_{v,<k})$.
Since we have 
\begin{align*}
\nabla_{\mathbf{W}} \ln \pi_{\theta(t)}(z|\mathbf{x}) &= \left( \mathbf{e}_z - \sum_{z' \in \mathcal{V}} \pi_{\theta(t)}(z'|\mathbf{x}) \cdot \mathbf{e}_{z'} \right) \mathbf{h}_{\mathbf{x}}^\top(t), \\
\nabla_{\mathbf{h}_{\mathbf{x}}} \ln \pi_{\theta(t)}(z|\mathbf{x}) &= \mathbf{W}_z(t) - \sum_{z' \in \mathcal{V}} \pi_{\theta(t)}(z'|\mathbf{x}) \cdot \mathbf{W}_{z'}(t).
\end{align*}
Putting this back in \eqref{eq:back in} together with a few algebra steps, yields
\begin{align}
\frac{d}{dt} \ln\pi_{\theta(t)} (\y_v | \x_v) 
&= \text{(I)} + \text{(II)} 
\end{align}
where:
\begin{align}
\text{(I)} &= \sum_{k=1}^{|\y_v|} \sum_{i=1}^{n} \sum_{k'=1}^{|\y_{i}|} 
\alpha_{k,k'}(t) \cdot 
\left\langle \mathbf{h}_{\x_v, \y_{v,<k}}(t), \mathbf{h}_{\x_i, \y_{i,<k'}}(t) \right\rangle \\
\text{(II)} &= \sum_{k=1}^{|\y_v|} \left\langle \w_{\y_{v,k}}(t) - \sum_{z \in \mathcal{V}} \pi_{\theta(t)}(z|\x_v) \cdot \w_z(t),
\sum_{i'=1}^{n_k} (\w_{\y_{i',k}} - \sum_{z \in \mathcal{V}} \pi_{\theta(t)}(z|\x_v) \cdot \w_z(t)) \right\rangle 
\end{align}
where $\alpha_{k,k'}(t) = \left\langle 
\mathbf{e}_{\y_{v,k}} - \pi_{\theta(t)} (\cdot | \x, \y_{v,<k}) ,
 \mathbf{e}_{\y_{i,k'}} - \pi_{\theta(t)} (\cdot | \x, \y_{i,<k'})
\right\rangle$. 
By taking the $i$-th sample, we can obtain \Cref{the:val1}.
\noindent We observe the following:

(1) When the training input $\x_i$ differs from the valuation input $\x_v$, its influence on the valuation target arises solely through Term (I), which captures the contribution of the token embeddings and all network parameters except the token unembedding layer.

(2) The effect of the token unembeddings is concentrated in cases where the training and valuation data share the same input $\x$ and exhibit overlapping output predictions $\y$.
\\
To eliminate this dependence on token unembeddings, we impose the following assumption:

\begin{assumption}[Distinct Input]
The training dataset satisfies that no training input $\x_i$ is identical to the valuation input $\x_v$.
\end{assumption}

\noindent
Under the Assumption 2, the contribution from token unembeddings (Term (II)) vanishes, so that the influence of the training data on the valuation data arises entirely through the shared representation features captured in Term (I).
\noindent
This assumption is mild, as training inputs typically differ from valuation inputs in practice — especially in vision-language datasets, where the input images are almost always distinct. Extending this result to cases where training examples share the same input but differ in their outputs $\y$ is straightforward: the output prefix $\y_{<k}$ can be incorporated into the input $\x$, treating each unique pair $(\x, \y_{<k})$ as a distinct input, where $k-1$ indicates the point at which the outputs begin to differ. 
Combining \Cref{the:val1} and Assumption~2 then yields \Cref{the:val}.

\end{document}